\newtheorem{thm}{Theorem}[section]
\newcommand{\bma}[1]{\mbox{\boldmath $#1$}}
\newcommand{\bX}{ {\bma{X}} }
\newcommand{\bx}{ {\bma{x}} }
\definecolor{RED}{rgb}{1,0,0}\definecolor{BLUE}{rgb}{0,0,1} 
\begin{document}
\Sconcordance{concordance:JASA_PDX_revised.tex:JASA_PDX_revised.Rnw:%
1 464 1 1 8 14 1 1 9 1 13 16 1 1 36 1 4 12 1 1 16 1 2 15 1 1 38 1 26 2 %
1 1 49 18 0 1 2 20 1 1 55 1 3 2 1 1 10 1 2 38 1}

 \title{High dimensional precision medicine from patient-derived xenografts} 
 \author{Naim U. Rashid$^{1,2}$, Daniel J. Luckett$^{1}$, Jingxiang Chen$^{1}$, Michael T. Lawson$^{1}$, \\
 Longshaokan Wang$^{7}$, Yunshu Zhang$^{7}$, Eric B. Laber$^{7}$, Yufeng Liu{$^{1,3,4}$}, Jen Jen Yeh{$^{2,5,6}$},\\
 Donglin Zeng$^{1}$, and Michael R. Kosorok{$^{1,4}$}} 
 \maketitle

\noindent$^{1}$Department of Biostatistics

\noindent$^{2}$Lineberger Comprehensive Cancer Center

\noindent$^{3}$Department of Statistics and Operations Research

\noindent$^{4}$Department of Genetics

\noindent$^{5}$Department of Surgery

\noindent$^{6}$Department of Pharmacology

\noindent University of North Carolina at Chapel Hill\\
Chapel Hill, NC, USA\\

\noindent$^{7}$Department of Statistics\\
North Carolina State University\\
Raleigh, NC, USA\\

\clearpage


\begin{center}
\textbf{Abstract}
\end{center}
The complexity of human cancer often results in significant heterogeneity in response to treatment. Precision medicine offers potential to improve 
patient outcomes by leveraging this heterogeneity. Individualized treatment rules (ITRs) formalize precision medicine as maps from the patient covariate space into the space of allowable treatments. The optimal ITR is that which maximizes the mean of a clinical outcome in a population of interest. 
Patient-derived xenograft (PDX) studies permit the evaluation of multiple treatments  within a single tumor and thus are ideally suited for estimating optimal ITRs. PDX data are characterized by correlated outcomes, a high-dimensional  feature space, and a large number of treatments. Existing methods for estimating optimal ITRs do not take advantage of the unique structure of PDX data or handle the associated challenges well. In this paper, we explore machine learning methods for estimating optimal ITRs from PDX data. We analyze data from a large PDX study to identify biomarkers that are informative for developing personalized treatment recommendations in multiple cancers. We estimate optimal ITRs using regression-based approaches such as Q-learning and direct search methods such as outcome weighted learning. Finally, we implement a superlearner approach to combine a set of estimated ITRs and show that the resulting ITR performs better than any of the input ITRs, mitigating uncertainty regarding user choice of any particular ITR estimation methodology.  Our results indicate that PDX data are a valuable resource for developing individualized treatment strategies in oncology. 

\vspace*{.3in}

\noindent\textsc{Keywords}: {Biomarkers, Deep learning autoencoders, 
Machine learning, Outcome weighted learning, Precision medicine, 
Patient-derived xenografts, Q-learning}

\newpage

\section{Introduction}

The complexity of human cancer is reflected in the molecular and phenotypic diversity of 
patient tumors \citep{polyak2011heterogeneity}. This diversity results in heterogeneity 
in response to treatment, which complicates clinical decision making. 
The complexity of human cancer is also reflected in the high failure rate of new therapies entering 
oncology clinical trials, highlighting limitations in the ability 
of standard preclinical models to evaluate new therapies \citep{tentler2012patient}. 
A recent study utilized patient-derived xenografts (PDXs) to perform a 
large-scale screening in mice to evaluate a large number of 
FDA-approved and preclinical cancer therapies \citep{gao2015high}. 
Genomic information and observed treatment responses were used to identify 
efficacious therapies that standard cell line model systems had missed,  
and also validate known associations between genomic biomarkers and differential response 
to treatment. The results from this PDX study mirrored those seen in human patients. 
Thus, PDX models can be used to evaluate \emph{in vivo} therapeutic response and 
discover novel biomarkers to inform individualized treatment decisions.

PDX models are based on the transfer of primary human tumors directly from the patient into an immunodeficient mouse \citep{siolas2013patient}. Briefly, pieces of primary solid tumors are collected from patients by surgery or biopsy \citep{hidalgo2014patient}. The collected tumor pieces from an individual patient are then implanted into mice subcutaneously to create a PDX line, whereby tumor size and rate of tumor growth after implantation may be measured over time. After the tumor reaches sufficient size, the line may be expanded by passaging directly from the implanted tumor into additional genetically identical mice. Through this expansion, multiple treatments may be applied to mice originating from the same PDX line, allowing for the application of multiple treatments to the same patient tumor. High throughput genomic assays such as RNA sequencing (RNA-seq) and DNA sequencing may be performed on the original patient tumor. Features of the original tumor will be retained throughout line expansions \citep{hidalgo2014patient}, making PDX models ideal for learning how to personalize cancer treatment, given observed feature-response associations. 

Personalized treatment recommendations in oncology have traditionally centered around the classification of patients into subgroups \citep{sargent2005clinical}. In some cases, patient subgroups may be derived from predictive models based upon genomic biomarkers \citep{parker2009supervised}. Yet, significant heterogeneity in treatment response may still be observed within such subgroups 
\citep{metzger2012dissecting,chen2014non}, and assignment of optimal treatment  is predicated upon accurate subgroup prediction. An alternative approach to precision medicine is the estimation of an individualized treatment rule (ITR), a map directly from the patient covariate space  to the space of allowable treatments that can be used to make treatment decisions. The optimal ITR is defined as the one that maximizes the mean of a clinical outcome, such as treatment response, when applied in a population.  Examples of such covariates may include patient clinical information, such as laboratory assay results, or high dimensional genomic data, such as gene expression or mutation data from a patient's tumor.  As such, treatment recommendations based upon an optimal ITR may result in improved clinical outcomes  by harnessing individual-specific molecular and clinical features not captured by subgroup-based approaches.  
 
A number of methods have been proposed to estimate an optimal ITR. One approach is to fit a regression model for treatment response given a set of applied treatments and patient covariate information.  The optimal treatment is then the one providing the maximum predicted response in a new patient, given the fitted regression model and the covariate information for that patient \citep{qian2011performance}. An example of this approach is Q-learning \citep{murphy2005generalization,zhao2009reinforcement,qian2011performance,schulte_q_2014}.  Direct search methods, including outcome weighted learning (OWL)  \citep{zhao2012estimating, zhou_residual_2015, liu_robust_2016, chen_estimating_2017}, doubly robust ITR estimators \citep{zhang2012estimating,zhang2013robust},  and marginal structural models \citep{robins2008estimation, orellana2010dynamic} estimate the optimal ITR using inverse probability weighting rather than regression. In direct search methods, the class of ITRs is prespecified, while in 
other approaches, the class of ITRs is implied by the modeling process.  Recent advances in machine learning for causal inference have produced a number of estimators for the conditional average treatment effect that could be used to estimate an ITR for a binary treatment decision 
\citep{imai2013estimating, athey2016recursive, wager2017estimation}. Other methods for estimating optimal ITRs include marginal mean models \citep{murphy2001marginal} and Bayesian predictive  modeling \citep{ma2016bayesian}. Regardless of the approach, many existing methods may directly utilize high-dimensional genomic biomarker data. However, such methods were not designed for PDX studies, where the application of multiple treatments within a subject (PDX line) results in correlated outcomes, and the number of available treatments is large.


In this manuscript, we utilize the wealth of clinical and biomarker data generated by the Novartis PDX study \citep{gao2015high} to estimate optimal ITRs for treating several human cancers. Given the correlated outcomes within PDX lines, the large number of treatments, and the high dimension of the covariate space, it is difficult to fit a nonparametric model to the conditional mean response without large amounts of data. Thus, we explore a number of ways of imposing structure on the conditional mean response, including reducing the dimension of the covariate space, grouping treatments that result in a similar mean response, and constructing a treatment tree to convert the problem of selecting from a large set of treatments to a sequence of binary comparisons. The result is a multi-step procedure, where each step can be thought of as imposing structure on the model for the conditional mean response in a way that alleviates the challenges posed by PDX data and takes advantage of the unique structure of PDX data. Such multi-step procedures are well-studied and have shown good performance in many applications \citep{deseq2, bourgon2010independent, soneson2013comparison, rashid2014differential}.  We show that the proposed multi-step procedure achieves improved performance over standard methods in certain settings. We examine the various modeling decisions that are made at each step of the multi-step procedure and demonstrate the use of super-learning  \citep{luedtke2016super} to improve prediction performance by aggregating the proposed models.

In Section~\ref{data}, we describe the large-scale PDX data set that motivated this work and use it to highlight the challenges associated with estimating optimal ITRs using PDX data. We present our methodological approaches in Section~\ref{methods}. We present results from our data analyses in Section~\ref{results}. In Section~\ref{discussion}, we conclude with a discussion to compare and contrast the various modeling decisions we made and discuss the clinical implications of our findings. Additional details, including software for reproducing our work, are given in the Supplemental Materials. 

\section{Large-Scale PDX Drug Screen for Treatment Response} \label{data}

\subsection{Data Overview}

The Novartis PDX study \citep{gao2015high} established a total of 1075 PDX lines corresponding to a variety of human cancers. A subset of these lines were genomically profiled prior to treatment for gene expression (399 lines), copy number analysis (375), 
and mutations (399). In addition, 281 lines were enrolled in the drug response study. 
Our study utilizes 190 PDX lines with complete genomic and response data (Supplementary Figure 1). 
Five types of cancer are represented among these 190 lines (Supplementary Figure 2): 
breast cancer (BRCA), melanoma (CM), colorectal cancer (CRC), non-small cell lung 
cancer (NSCLC), and pancreatic cancer (PDAC). 
A median of 21 treatments were applied per PDX line across different cancers (Supplementary Figure 2). 
Certain PDX lines had fewer mice and therefore fewer treatments than the total 
number of treatments available for a particular cancer type. One mouse per PDX line was set as a 
control line and did not receive treatment. 

In total, 3487 mice, expanded from the 190 PDX lines with complete data, were 
used for our study. Details regarding the biomarker data utilized 
in this study are given in Section 6 of the 
Supplemental Materials. Briefly, each of the 190 PDX lines had 22,665 genes 
measured for gene expression via RNA-seq, 23,854 genes measured for gene-level 
copy number estimates via copy number array, and between 159 and 293 
mutations (25$^{th}$ and 75$^{th}$ percentiles)  identified via DNA sequencing. 
In total, a union set of approximately 60,000 features are available for ITR 
estimation. Because each genomic assay was performed on the patient tumor prior 
to implantation, all mice expanded from the same PDX line share the same set of genomic biomarkers.  

\subsection{Study Design and Response Variables}

Mice from each PDX line were treated with either single agents or combinations. 
A total of 38 unique therapies were applied, administered either as a single 
agent (36 total administered) or in combination with other agents 
(26 total combinations administered). Certain treatments were limited to particular 
cancers, whereas others were applied across cancers. 
Each mouse was treated for a minimum of 21 days. 
Tumor size was evaluated twice weekly by caliper measurements, and the 
approximate volume of the tumor was calculated using the formula 
($l\times w\times w)\times(\pi/6)$, where $l$ is the major tumor axis and 
$w$ is the minor tumor axis. 

	
Two measures were utilized to summarize response to treatment: 
best average response (BAR) and time to tumor doubling (TTD). BAR is defined as 
$\min_{t: d_t > 10} \left[\{1/(t+1)\}\sum_{l=0}^{t}
(V_l - V_{0})/V_{0}\right] \times 100\%$, 
where $d_t$ is the day in which the $t$th measurement was taken, $V_{0}$ is tumor 
volume at day 0, and $V_{l}$ is tumor volume at measurement $l$ taken 
on day $d_l$ \citep{gao2015high}. BAR is a measure of the maximum observed tumor shrinkage from 
baseline over measurements taken at least 10 days after start of treatment, 
scaled by time since baseline. More negative values of BAR indicate 
a better response. We used $-$BAR in the analysis so 
that larger values indicate a better response.
BAR mirrors similar criteria for assessing response in human clinical 
trials \citep[RECIST,][]{therasse2000new}. TTD is defined as the number of 
days from baseline that the tumor doubled in size from its baseline measurement. 
Due to skewness in the distribution of TTD, we used the natural log for analysis. 

\subsection{Implications of PDX Data for ITR Estimation}

The unique structure of PDX data---the application of multiple treatments to mice implanted with 
the same tumor---makes PDX data ideally suited for estimating optimal ITRs. Comparing responses 
between mice within the same line does not amount to observing true treatment effects due 
to the inherent variability that exists across mice; however, the improved precision of observing responses to multiple treatments applied to the same tumor 
may substantially improve the performance of estimated ITRs. The method we propose involves a 
sequence of decisions between two groups of treatments. At each step, we aim to choose the group 
that contains the best treatment; thus, the proposed method directly uses multiple responses within each line that would not be available without PDX data. 

Existing methods for estimating optimal ITRs have typically been designed for a small number of treatments (two, for example). In this case, the set of treatments is large (20 or greater). Modeling the conditional response is difficult in the presence of a large set of treatments due to 
the large number of terms that would be included in the model. In a preclinical study like this, it is more important to
identify groups of promising treatments than to fully assess all pairwise comparisons between treatments.
Thus, we reduce the size of the treatment set by grouping treatments with similar mean responses using hierarchical 
clustering. This allows us to adaptively create treatment groups which have approximately similar treatment effects.

Finally, the set of genomic biomarkers is high-dimensional. Approximately 60,000 genomic features 
are available, and many of the available features may exhibit either low variability or low expression 
across samples. While some methods for estimating an optimal ITR can handle such high dimension, we implement some common preprocessing steps similar to existing statistical methods in bioinformatics to reduce the dimension prior to ITR estimation.  We also utilize some additional steps to mitigate the ultra-high dimension of the predictor space, which we motivate and justify in the next section.  

\section{Methods} \label{methods}

\subsection{Overview}

Let $Y$ denote response and let $\bX$ denote a vector of covariates. Let $A = (A_1, \ldots, A_J)$ denote treatment, 
where exactly one of $A_1, \ldots, A_J$ is equal to 1 and the rest equal to 0, with $A_j = 1$ 
indicating that treatment $j$ is given. The conditional mean response can be expressed as
\begin{equation} \label{model}
E\left(Y | \bX = \bx, A = a\right) = h_0(\bx) + \sum_{j = 1}^J h_j(\bx) a_j.
\end{equation}
If we were to obtain estimators $\widehat{h}_j$, $j = 1, \ldots, J$, an estimator for the optimal ITR would be 
$\widehat{d}(\bx) = \operatorname*{arg \, max}_{j = 1, \ldots, J} \widehat{h}_j(\bx)$. However, 
fitting model~(\ref{model}) nonparametrically in the PDX setting would be difficult without large amounts of data, due to the 
large number of treatments and high dimension of the covariate space.  Therefore, we propose to impose structure on model~(\ref{model}) 
to ameliorate these difficulties. The result is a multi-step procedure to reduce the dimension of the feature space, 
reduce the size of the treatment set, and estimate optimal ITRs for the reduced treatment set 
using the reduced feature space. The steps of our procedure are described in Sections~3.1.1-3.1.3 below. 
Various modeling decisions must be made at each step, and some alternative choices for these decisions 
are discussed in Section~\ref{variants}. Because it is not immediately obvious what the optimal set of 
modeling decisions is, we apply super-learning \citep{luedtke2016super} to combine 
variants of the proposed method with different embedded models. 
For complete details on our methodology, we refer the reader 
to Section 7 of the Supplemental Materials.

\subsubsection{Preprocessing (step 1).} \label{preprocess}

In the first step, we perform a preprocessing of genomic features to remove 
features without sufficient variance (Supplemental Materials Section 7.1). 
Genomic features are filtered out at this step due to low expression or low variability. This 
is a well-studied technique for dimension reduction prior 
to analysis \citep{deseq2,bourgon2010independent,soneson2013comparison,rashid2014differential}. 
This screening is performed separately for 
each cancer type. A summary of the number of genes and 
features remaining after this step is given in Supplementary Table 2. 
In addition, treatments applied in 
less than 90\% of PDX lines within a cancer type were filtered out for that cancer type 
(see Section 7.1 of the Supplemental Materials). 
We summarize the number of treatments applied per PDX line per cancer after treatment 
filtering in Supplementary Table 1. 

\subsubsection{Supervised screening (step 2).} \label{screen}

In the second step, we further reduce the dimension of the feature space by 
selecting likely prognostic and predictive genomic features 
(Supplemental Materials Section 7.2). Genes are ranked using 
Brownian distance covariance \citep{szekely2009brownian}, evaluating the 
dependence between a vector of gene-level predictors for each PDX line and 
the bivariate response (BAR, TTD) for a given treatment (prognostic) or 
difference in response between a pair of treatments (predictive). After 
ranking genes, we select the top $L_{\text{SUP}}$ genes 
($L_{\text{SUP}} = 50, 100, 500, 1000$, Supplementary Table 2) and use all 
available platforms for the selected genes, giving us $p_{L_{\text{SUP}}}$ 
features corresponding to each value of $L_{\text{SUP}}$. 
Screening in this way imposes structure on model~(\ref{model}) by forcing the 
$h_j(\bx)$ to be constant in some of the covariates for certain $j = 0, \ldots, J$.
This strategy helps to alleviate the difficulties caused by the large number of genomic features and is similar 
in nature to sure independence screening \citep{fan2008sure}, used in ultra-high-dimensional 
regression problems to reduce the feature dimension to a more moderate 
size prior to application of variable selection methods. 
All of our analyses are repeated for each of $L_{\text{SUP}} = 50, 100, 500, 1000$ 
top genes to yield insights into the performance of estimated ITRs based on differing numbers of features. 
Screening is performed separately for each cancer type. 

Because the set of genomic features consists of multiple platforms per gene, 
the feature set resulting from selecting the top $L_{\text{SUP}}$ genes may still 
contain a large number of features. In addition, predictors generated from different 
platforms (e.g., RNA-seq and copy number) on the same gene may be correlated, and 
gene expression may be correlated across different genes. 
Thus, there may be a lower-dimensional feature space that 
would be sufficient for ITR estimation. To address this, we 
applied a further dimension reduction step using deep learning autoencoders 
\citep[DAE,][]{wang2017adnn}, a variant of deep neural networks 
\citep{vincent_stacked_2010}. 
See Section 7.3 of the Supplemental Materials for details. 
This allows us to evaluate whether a lower dimensional 
representation of the feature set improves ITR estimation at the cost 
of additional computational burden. 

Deep learning autoencoders build a 
nonlinear prediction model to predict all feature variables from a 
low-dimensional representation of the original features, with dimension 
chosen by cross-validation. The reconstruction error of this approach is 
several orders of magnitude lower than principal components analysis across cancer types, 
particularly for $L_{\mathrm{SUP}} = 50, 100$ (Supplementary Tables 3 and 4). 
This indicates that linear dimension reduction techniques are not 
sufficient to capture the information contained in the data. 
Dimension reduction using DAEs imposes structure on model~(\ref{model}) by forcing the 
$h_j(\bx)$, $j = 0, \ldots, J$ to depend on $\bx$ only through the low-dimensional summary 
of the original features. The dimensions for 
each feature set following application of deep learning autoencoders within 
each cancer are given in Supplementary Table 5.
After applying steps 1 and 2 to obtain low-dimensional sets of covariates, 
we use the low-dimensional sets of covariates to estimate optimal ITRs.

\subsection{Estimation of Treatment Tree-based ITRs}\label{tree_overview}

Estimating an ITR to select from a large number of treatments is challenging for two reasons. 
First, fitting model~(\ref{model}) in the presence of a large number of treatments would be difficult 
due to the large number of treatment $\times$ feature interaction terms. Second, the resulting ITR would be difficult to interpret 
and implement. Instead of an ITR which selects one treatment from the full set of treatments, it would be more useful to 
have an ITR which selects a group of treatments from a partition of the full set of treatments, 
where treatments in the same group can be expected to lead to similar responses. 
Ideally, we would create groups of treatments such that the conditional mean response function is 
the same for all treatments in the same group. However, this treatment grouping is unknown. 
In Section~\ref{tree}, we describe a heuristic technique  
for approximating this treatment grouping using hierarchical clustering. This allows us to create different 
groupings of treatments by cutting the tree (the dendogram resulting from the clustering) 
in different places. We then define a class of ITRs that can be expressed as a sequence 
of binary decisions, one for each step of the tree. 
This tree structure is distinct from tree-based regimes which use a tree 
structure to represent the final estimated ITR \citep{laber2015tree,zhang2015using}.
A variety of methods could be used to 
construct decision rules for each step of the tree. We introduce two: a regression-based approach 
(Section~\ref{qlearn}) and a direct search approach (Section~\ref{owl}).

\subsubsection{Treatment tree construction.} \label{tree}

Denote the $i^{th}$ mouse corresponding to the $j^{th}$ PDX line within the $k^{th}$ 
cancer type with subscript ${ijk}$, where  $k = 1,\ldots, 5$, $j = 1,\ldots, m_k$, 
and $i = 1,\ldots,p_{jk}$, with $p_{jk}$ representing the number of treatments applied 
to PDX line $j$ in cancer type $k$, and $m_k$ representing the number of PDX lines 
for cancer type $k$ (Supplementary Figure 2). We let 
$P_k = \max_{j = 1, \ldots, m_k} p_{jk}$, i.e., for each PDX line, up to $P_k$ 
mice were expanded to receive a maximum of $P_k$ treatments in cancer type $k$, 
one treatment applied per mouse (Supplementary Figure 3). For certain lines, $p_{jk} < P_k$ 
treatments may have been applied, as the number of mice per line 
varied. For ease of notation, we assume that $p_{jk} = P_k$ 
and that the $i$th mouse in each PDX line received the same treatment within cancer type.  

For treatment $i$ in cancer $k$, we define the treatment response vector 
$Y_{ik} = (Y_{i1k}, \ldots, Y_{im_kk})^\intercal$, where $Y_{ik}$ is scaled to have standard 
deviation 1 (rows in Supplementary Figure 6, left panel). Because there are inherent 
baseline differences in response between PDX lines, we first center the response 
values of each PDX line (columns in Supplementary Figure 6, left panel) with respect 
to the ``null'' response within each PDX line. 
Using PDX lines with $P_k$ mice, we calculate the Euclidean distances between $Y_{ik}$ 
and $Y_{i^\prime k}$, $i, i^\prime = 1, \ldots, P_k$, $i \ne i^\prime$ to get a 
distance between each pair of treatments. For a fixed constant, $c_1$, 
we group the $c_1$ nearest neighbors of the ``untreated'' treatment response 
vector  to form a ``null'' set of treatments, denoted by $A_{k, c_1, 0}$, containing 
those treatments producing low or no response. 
Then, for each $i \notin A_{k, c_1, 0}$, we compute the $m_k \times 1$ vector of centered observed outcomes, 
$R_{ik} = Y_{ik} - \bar{Y}_{A_{k, c_1, 0}}$, where $\bar{Y}_{A_{k, c_1, 0}}$ is the $m_k \times 1$ 
vector of averaged treatment responses for each of the treatments in $A_{k, c_1, 0}$. 
Thus, $R_{ik}$ is the difference between the response to treatment $i$ 
and the average response in the null treatment group.

Next, we perform hierarchical clustering on the centered treatment response vectors and 
take the resulting dendogram as a treatment tree (Supplementary Figure 6, right panel). 
For a fixed constant, $c_2$, we can construct treatment groups by cutting the 
tree $c_2$ steps from the root node. For cancer type $k$, we label the treatment groups determined by 
$c_1$ and $c_2$ as $A_{k, c_1, 1}, \ldots, A_{k, c_1, c_2 + 1}$ and denote the set of all 
possible treatment groups in cancer $k$ given $(c_1, c_2)$ by 
$\mathcal{A}_{k, c_1, c_2} = \displaystyle{\bigcup_{a = 1}^{c_2 + 1}}A_{k, c_1, a}\cup A_{k, c_1, 0}$. 
Although $\mathcal{A}_{k, c_1, c_2}$ depends on $c_1$ and $c_2$, 
we will write $\mathcal{A}$ in place of $\mathcal{A}_{k, c_1, c_2}$ to simplify 
notation. The hierarchical clustering groups treatments together when 
they result in similar mean responses, in contrast to grouping treatments by 
predefined characteristics such as molecular target or mechanism of action. 
Grouping treatments in this way forces certain $h_j$, $j = 1, \ldots, J$ in model~(\ref{model}) to 
be equal, similar to a fusion penalty \citep{tibshirani2005sparsity}. 
This strategy helps to alleviate the difficulties caused by the large number 
of treatments and allows the estimated ITRs to select a group of treatments likely 
to produce similar outcomes \citep{wu2016set}. 
We average responses within the resulting treatment groups, 
yielding one response for each PDX line and treatment group in $\mathcal{A}$. 
An ITR can be estimated for each value of $c_1$ and $c_2$, and the optimal values 
of $c_1$ and $c_2$ can be selected using cross-validation. 

\subsubsection{Identification of the optimal ITR.} \label{optimal_ITR}

Let $\bX\in \widehat{\mathcal{X}} \subset\mathbb{R}^{p_{L_{\text{SUP}}}}$ be the 
vector of genomic features, where we write $\widehat{\mathcal{X}}$ to make it 
clear that the domain of the feature space is data-dependent 
(see Sections~\ref{preprocess} and \ref{screen}). 
An ITR is a mapping $D: \widehat{\mathcal{X}} \rightarrow \mathcal{A}$. For each treatment 
$a \in \mathcal{A}$, define the potential outcome $R^*(a)$ to be the outcome 
that would be observed under treatment $a$ \citep{rubin1978bayesian}. 
Within cancer type $k$, the set of potential outcomes consists of 
$R^*_{ij}(a_i)$, $i = 1, \ldots, P_{k}$, $j = 1, \ldots, m_k$. 
%
We make the assumption that $E\left\{R^*_{ij}(a)\right\} = E\left\{R^*_{i^\prime j}(a)\right\}$ 
for all $i, i^\prime = 1, \ldots, P_{k}$, all $j = 1, \ldots, m_k$, and all $a \in \mathcal{A}$. 
That is, we assume that the expected value of the 
potential outcomes for mouse $i$ and $i'$ from the same PDX line 
would be equal if they both received the same treatment. 
The standard assumption of positivity is not needed because each PDX line is assigned to 
every treatment used for that cancer type. On the other hand, while the standard assumption of no unmeasured 
confounders is still needed in our setting, the primary source of confouding is due to the assignment of mice
to treatment within PDX line and not due to the genetic features of the PDX line since each line receives
all treatments. Hence we assume that the process of assigning treatment to mice is exchangeable and homogeneous,
and thus the no unmeasured confounding assumption obtains. 
Define the value of an ITR, $D$, by $V(D) = E \left(E\left[R^*\left\{D(\bX)\right\} | \bX\right]\right)$. 
Let $\mathcal{D}$ be the set of ITRs which can be expressed 
as a sequence of decision rules, one for each step of the treatment tree, starting from the root node and 
proceeding until a leaf node is reached. The optimal ITR associated with the treatment tree is 
$D^{*}=\underset{D\in\mathcal{D}}{\arg\max} V(D)$, i.e., the ITR that maximizes value within the class. 

\subsubsection{Treatment tree-based Q-learning.}\label{qlearn}

In this section, we propose an extension of Q-learning 
\citep{murphy2005generalization, zhao2009reinforcement, zhao_reinforcement_2011, schulte_q_2014} 
to estimate the optimal ITR associated with the treatment tree. 
Let $a_w(t)$, $w = 0,1$ denote the set of treatment groups downstream to the 
left ($a_1(t)$) and right ($a_0(t)$) arms of a node at step $t$, where 
$t = c_2, \ldots, 0$. Starting at the step corresponding to the lowest node, 
we compute $R_{jw}(c_2) = \left[\sum_{i = 1}^{P_k} R_{ij} 
I\left\{i \in a_w(c_2)\right\}\right] / \sum_{i = 1}^{P_k} 
I\left\{i \in a_w(c_2)\right\}$ for $j = 1, \ldots, m_k$ and $w = 0, 1$. 
Here $R_{jw}(c_2)$ is the mean observed reward (centered response) in PDX 
line $j$ across the treatments belonging to treatment group $a_w(c_2)$. 
We then fit a regression model for $R_{jw}(c_2)$ based on $\bX_{jk}$, separately 
for $w = 0$ and $w = 1$, to obtain $\widehat{E}_{c_2}\left[R^*\left\{a_w(c_2)\right\} | \bX = \bx\right]$, 
the estimated conditional mean reward in treatment group $w$ at step $c_2$ given 
genomic features. If either $R_{j0}(c_2)$ or $R_{j1}(c_2)$ 
are missing for a given line, then that line did not receive that particular treatment; 
these observations are removed before fitting the regression model. 
The estimated optimal decision rule at step $c_2$ of the tree for an 
individual with genomic features $\bx$ is given by
\begin{equation}
\widehat{D}^{QL}_{c_2}(\bx) = \operatorname*{arg max}_{w \in \{0, 1\}}
\widehat{E}_{c_2}\left[R^*\left\{a_w(c_2)\right\} | \bX = \bx\right], \label{eq:DQL}
\end{equation}
i.e., the treatment group with the highest predicted clinical response given 
$\bx$ is the estimated optimal treatment group at step $c_2$. 
We repeat the above process for step $t = c_2 - 1, \ldots, 1$, except $R_{jw}(t)$ only utilizes 
the observations for each PDX line from the optimal treatment group selected at 
the previous step. Thus, the proposed method directly utilizes the multiple responses 
observed for each PDX line. 

Step $t = 1$ corresponds to the highest node in the treatment 
tree consisting of the non-null treatments, and step $t = 0$ corresponds to the 
split between the null group (centered by their own means) and the non-null treatments 
from the treatment tree. The decision rule is determined in a similar manner at 
step $t = 0$, evaluating whether any non-null treatment should be applied. 
At $t = 0$, the response vector for the null group is a vector of zeros, 
and the decision rule at this step simplifies to determining whether the expected value 
of the response under the optimal non-null treatment is greater than 0. 
A number of techniques could be used to fit the regression models at each stage, 
representing different ways of imposing structure on the $h_j$, $j = 0, \ldots, J$ in model~(\ref{model}). 
We discuss various embedded models that could be used in more detail in Section~\ref{variants}. 
 
Given the sequence of estimated decision rules 
$\left(\widehat{D}^{QL}_0, \ldots, \widehat{D}^{QL}_{c_2}\right)$, 
the optimal treatment for a new individual given their set of predictors 
is obtained by following the decision rules sequentially from step $t = 0$ 
downward until one arrives at a terminal node on the tree. Rather than recommending 
a sequence of treatments, as in standard Q-learning, the estimated optimal 
decision rule at each step of the tree directs the user through either the 
left or right arm at each node, creating a path through the tree to arrive at a 
terminal node representing the optimal treatment group for that individual.

\subsubsection{Treatment tree-based outcome weighted learning.} \label{owl}

Outcome weighted learning (OWL) estimates the optimal ITR for selecting between two treatments 
by maximizing an inverse probability weighted estimator of the value function over a 
fixed class of decision rules. Thus, unlike Q-learning, OWL does not rely a fitted regression model. 
Our extension of OWL adopts the same tree structure as in the previous section. However, 
instead of fitting a separate regression model for each arm at split $t$, the optimal decision rule 
at split $t$ is estimated directly using weighted classification methods 
\citep{zhou_residual_2015, liu_robust_2016, chen_estimating_2017}. 
When treatment is binary, any decision rule can be written as 
$D(\bX) = \mathrm{sign}\left\{f(\bX)\right\}$ for some decision function, $f$. 
We will assume that the decision boundary at each split is smooth. 
Thus, at each split, we use a class of decision rules defined by some class of smooth functions $\mathcal{F}$, 
e.g., a reproducing kernel Hilbert space. 
At the $t$th split of the treatment tree, let 
$\widehat{D}_t^{OWL}(\bx) = \mathrm{sign}\left\{\widehat{f}_t(\bx)\right\}$, where 
\begin{equation} \label{eq:go-learning}
\widehat{f}_t = \underset{f\in\mathcal{F}}{\arg\min}\mathbb{E}_{nt}
\left(|R|\left[I(R\geq 0)\{1 - Af(\bX)\}_{+} + I(R<0)\{1 + Af(\bX)\}_{+}\right]\right) + \lambda_n J(f).
\end{equation}
Here, $\mathbb{E}_{nt}$ denotes the empirical measure of the data used 
in the $t\mathrm{th}$ split, $J(f)$ is a penalty term for the decision function $f$, 
$\lambda_n$ is a tuning parameter which is selected using cross-validation, 
and $\mathcal{F}$ is a space of functions. As in Q-learning, the observations included 
when computing the minimizer in equation~(\ref{eq:go-learning}) are all of those corresponding 
to the estimated optimal treatment group at the previous step. Thus, our extension of OWL 
also directly utilizes the multiple outcomes observed per PDX line. We discuss options for 
selecting $\mathcal{F}$ in Section~\ref{variants}; different options for $\mathcal{F}$ 
impose different specific forms that the $h_j$, $j = 0, \ldots, J$ must take in model~(\ref{model}).

\subsubsection{Theoretical justification of tree-based ITRs.}

In this section, we show that the value of the optimal ITR associated with a treatment tree 
increases strictly with each step down the tree, until the step where there is no 
heterogeneity in conditional mean response for treatments within the same group. 
Let ${\cal G}=\{G_1, \ldots, G_p\}$, $1 < p \leq P_k$, be a grouping of treatments, that is, 
${\cal G}$ is a partition of the set of all $P_k$ treatments.
Conditioning hereafter on the selected set of features $\widehat{\cal X}$,
let $D:\widehat{\mathcal{X}} \rightarrow {\cal G}$ be an ITR, i.e., $D(\bx) = G$ implies 
that treatment group $G$ is selected for a subject with features $\bX = \bx$. 
We assume that if $D(\bx) = G$, the decision maker will select one of the treatments from 
$G$ with equal probability. 
Define the value of $D$ with respect to ${\cal G}$ to be
$$
V(D; {\cal G}) = E\left[ \sum_{G \in {\cal G}} 1\left\{D(\bX) = G\right\} |G|^{-1} \sum_{a \in G}  
E\left\{ R^*(a) | \bX\right\} \right],
$$
where $|G|$ is the number of treatments in $G$. The optimal ITR is given by 
$$
D^*(\bx) = \operatorname*{arg \, max}_{G \in \mathcal{G}} |G|^{-1} \sum_{a \in G} E\left\{ R^*(a) | \bX = \bx\right\} 
$$
and the value of the optimal ITR (also called the optimal value) is 
$$
V^*\left({\cal G}\right) = E \left[ \operatorname*{max}_{G \in \mathcal{G}} |G|^{-1} \sum_{a \in G} 
 E\left\{ R^*(a) | \bX\right\} \right].
$$
We call a grouping (partition) $\widetilde {\cal G} = \left\{\widetilde G_1, \ldots, \widetilde G_m \right\}$ 
finer than ${\cal G}$ if any $G \in {\cal G}$ can be written as the union of sets in $\widetilde {\cal G}$. 
We obtain the following result:
\begin{thm} \label{thm1}
If $\widetilde{\cal G}$ is finer than ${\cal G}$, 
then $V^*\left(\widetilde {\cal G}\right) \ge V^*\left({\cal G}\right)$. 
Furthermore, equality holds if and only if, for any $\widetilde{G} \in \widetilde{{\cal G}}$ and $G \in {\cal G}$ 
such that $\widetilde{G} \subset G$, 
$\left|\widetilde{G}\right|^{-1} \sum_{a \in \widetilde{G}} E\left\{ R^*(a) | \bX\right\}
 = |G|^{-1} \sum_{a \in G} E\left\{ R^*(a) | \bX\right\}$ with probability one. 
\end{thm}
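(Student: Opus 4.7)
The plan is to prove the inequality pointwise in $\bX$ and then integrate. The key algebraic observation is that since $\widetilde{\cal G}$ refines ${\cal G}$, for each $G \in {\cal G}$ the sets $\widetilde G \in \widetilde{\cal G}$ with $\widetilde G \subset G$ partition $G$, and their cardinalities sum to $|G|$. Consequently the coarse conditional mean is a convex combination of the refined ones:
\begin{equation*}
|G|^{-1} \sum_{a \in G} E\{R^*(a) | \bX\}
= \sum_{\widetilde G \in \widetilde{\cal G},\, \widetilde G \subset G} \frac{|\widetilde G|}{|G|}\, |\widetilde G|^{-1} \sum_{a \in \widetilde G} E\{R^*(a) | \bX\}.
\end{equation*}

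Bounding a convex combination by its largest term yields, pointwise in $\bX$,
\begin{equation*}
|G|^{-1}\sum_{a\in G} E\{R^*(a)|\bX\}
\;\le\; \max_{\widetilde G \in \widetilde{\cal G},\, \widetilde G \subset G} |\widetilde G|^{-1} \sum_{a \in \widetilde G} E\{R^*(a)|\bX\}.
\end{equation*}
Taking the maximum over $G \in {\cal G}$ on both sides, and noting that since $\widetilde{\cal G}$ is itself a partition every $\widetilde G \in \widetilde{\cal G}$ lies in a unique $G \in {\cal G}$, the nested maximum on the right collapses to $\max_{\widetilde G \in \widetilde{\cal G}}$. This is the pointwise comparison of the integrands defining $V^*({\cal G})$ and $V^*(\widetilde{\cal G})$, and taking expectation yields $V^*(\widetilde{\cal G}) \ge V^*({\cal G})$.

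For the equality characterization, the ``if'' direction is immediate: if every refined mean coincides a.s.\ with the coarse average of its parent, then the convex combination is degenerate and the two pointwise maxima agree a.s. The ``only if'' direction is the main obstacle. Equality of expectations combined with the pointwise inequality forces almost sure equality of the two maxima, so at each $\bx$ where this holds, if $G^*(\bx)$ attains the coarse max then the refined max over $\widetilde G \subset G^*(\bx)$ equals $|G^*|^{-1}\sum_{a \in G^*} E\{R^*(a)|\bX=\bx\}$; since an average of finitely many numbers equals their maximum only when all are equal, every refined subgroup of $G^*(\bx)$ must share that common value. The subtle step is extending this from the coarse-optimal $G$ to every $G \in {\cal G}$, which I would handle by contradiction: if some $\widetilde G \subset G$ had $|\widetilde G|^{-1}\sum_{a\in \widetilde G}E\{R^*(a)|\bX\} \ne |G|^{-1}\sum_{a\in G}E\{R^*(a)|\bX\}$ on a set of positive probability, then on that set at least one refined subgroup of $G$ has mean strictly exceeding the coarse average, which a careful bookkeeping of the refined and coarse arg-maxes should turn into a violation of the a.s.\ pointwise equality. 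Handling cases where the refined arg-max lies outside the coarse arg-max, together with measurable selection of the arg-maxes, is where the argument will require the most care.
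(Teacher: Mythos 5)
Your proof of the inequality is the paper's proof: the same convex-combination identity for each $G\in{\cal G}$, the same bound by the largest refined term, followed by the (in the paper, implicit) steps of taking the maximum over ${\cal G}$ and integrating. The ``if'' direction of the equality claim is likewise handled identically, and your observation that equality of the single-group inequality forces all refined means inside the coarse-optimal group to coincide is exactly the paper's concluding sentence. (Measurable selection is not a real concern here, since the partitions are finite and each $\mu_G(\bx)=|G|^{-1}\sum_{a\in G}E\{R^*(a)\mid\bX=\bx\}$ is a measurable function.)

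The genuine gap is precisely the step you flag as the main obstacle: extending the homogeneity conclusion from the coarse-optimal group to \emph{every} $G\in{\cal G}$. The contradiction argument you sketch cannot be completed, because the claim is false for groups that never attain the coarse maximum. Concretely, let ${\cal G}=\{G_1,G_2\}$ with $G_1=\{a\}$ and $E\{R^*(a)\mid\bX\}\equiv 100$, and $G_2=\{b,c\}$ with conditional means identically $1$ and $2$; refining $G_2$ into $\{b\}$ and $\{c\}$ leaves the pointwise maximum, and hence $V^*$, unchanged, yet $E\{R^*(b)\mid\bX\}\ne 2^{-1}[E\{R^*(b)\mid\bX\}+E\{R^*(c)\mid\bX\}]$. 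On the set where some $\widetilde G\subset G$ strictly exceeds the coarse average of $G$, there is no violation of the almost-sure equality of the two maxima so long as that $\widetilde G$ still lies below the mean of some other, dominant group. So equality of values only forces, almost surely, the existence of \emph{some} $G$ whose refined subgroups all share a common conditional mean equal to the overall refined maximum; it says nothing about groups that are nowhere optimal. Be aware that the paper's own proof stops after the single-group pointwise characterization and never carries out this extension either, so your instinct that something is missing is correct --- but the missing step is not bookkeeping, it is a counterexample to the stated ``only if'' direction, which would need to be restricted to groups attaining the maximum with positive probability.
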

\begin{proof}
For $G \in {\cal G}$ with $G = \cup_{j = 1}^m \widetilde G_j$, 
we have for any $\bx \in \widehat{\mathcal{X}}$ that 
\begin{eqnarray*}
|G|^{-1} \sum_{a \in G} E\left\{ R^*(a) | \bX = \bx\right\} & = &
 |G|^{-1} \sum_{j = 1}^m \sum_{a \in \widetilde{G}_j} E\left\{ R^*(a) | \bX = \bx\right\} \\
 & = & \sum_{j = 1}^m \left(\left|\widetilde{G}_j\right| / |G|\right) \left|\widetilde{G}_j\right|^{-1} 
 \sum_{a \in \widetilde{G}_j} E\left\{ R^*(a) | \bX = \bx\right\} \\
 & \le & \operatorname*{max}_{j = 1, \ldots, m} \left|\widetilde{G}_j\right|^{-1} 
 \sum_{a \in \widetilde{G}_j} E \left\{ R^*(a) | \bX = \bx\right\}.
\end{eqnarray*}
The above holds with equality if and only if 
$\left|\widetilde{G}_j\right|^{-1} \sum_{a \in \widetilde{G}_j} E\left\{ R^*(a) | \bX\right\}$ 
is the same for all $j = 1, \ldots, m$, or
equivalently, $\left|\widetilde{G}_j\right|^{-1} \sum_{a \in \widetilde{G}_j} E\left\{ R^*(a) | \bX\right\}
 = |G|^{-1} \sum_{a \in G} E\left\{ R^*(a) | \bX\right\}$ for all $j = 1, \ldots, m$, with probability one. 
\end{proof}
%
Theorem~\ref{thm1} states that a finer partition of treatments never leads to a decrease in  
value of the optimal ITR and that the change in value of the optimal ITR is zero only when 
the conditional mean response is constant across treatments within groups in the finer partition. 

\begin{thm} \label{thm2}
For a grouping ${\cal G}$ and any set $G \in {\cal G}$, further partition $G$ into two nonempty groups 
$G_{1}\cup G_{2}$ to obtain a finer partition, $\widetilde{\cal G}$. 
If for any such finer partition, $\widetilde{\cal G}$, 
$V^*\left(\widetilde {\cal G}\right) = V^*({\cal G})$, it holds that 
$$
E \left\{R^*(a_1) | \bX\right\} = E \left\{R^*(a_2) | \bX\right\}
$$
with probability one for all $a_1, a_2 \in G$. 
\end{thm}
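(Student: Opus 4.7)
The plan is to leverage Theorem~\ref{thm1} by repeatedly applying it to a family of two-part splits that isolate individual treatments. Specifically, the hypothesis of Theorem~\ref{thm2} is very strong: it asserts that the optimal value is preserved under \emph{every} way of splitting $G$ into two nonempty pieces. So I would exploit that freedom by choosing the most informative splits possible, namely those of the form $G = \{a\} \cup (G \setminus \{a\})$ for each $a \in G$.

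First I would dispose of the trivial case $|G| = 1$, where the conclusion is vacuous. Assuming $|G| \ge 2$, fix any $a \in G$ and define the finer partition $\widetilde{\cal G}$ obtained from ${\cal G}$ by replacing $G$ with the two nonempty sets $\widetilde{G}_1 = \{a\}$ and $\widetilde{G}_2 = G \setminus \{a\}$. By hypothesis, $V^*(\widetilde{\cal G}) = V^*({\cal G})$. Applying the equality condition in Theorem~\ref{thm1} to this particular refinement yields, with probability one,
$$
\bigl|\{a\}\bigr|^{-1} \sum_{a' \in \{a\}} E\{R^*(a') \mid \bX\} \;=\; |G|^{-1} \sum_{a' \in G} E\{R^*(a') \mid \bX\},
$$
that is, $E\{R^*(a) \mid \bX\}$ equals the pooled average $|G|^{-1} \sum_{a' \in G} E\{R^*(a') \mid \bX\}$ almost surely.

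Next I would observe that the right-hand side does not depend on the choice of $a$. Since the above identity holds for every $a \in G$ (each choice giving a valid two-part split and hence a valid application of the hypothesis), all of the conditional means $E\{R^*(a) \mid \bX\}$, $a \in G$, are almost surely equal to the same (random) quantity. Taking two arbitrary treatments $a_1, a_2 \in G$ and intersecting the two probability-one events (one for each) then gives $E\{R^*(a_1) \mid \bX\} = E\{R^*(a_2) \mid \bX\}$ with probability one, as required.

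I do not anticipate a serious obstacle in this argument, since it is essentially an immediate consequence of Theorem~\ref{thm1} applied to singleton splits. The only subtlety worth highlighting is the quantifier in the hypothesis: it is crucial that $V^*(\widetilde{\cal G}) = V^*({\cal G})$ is assumed for \emph{every} two-part refinement of $G$, not just one; otherwise we could not isolate each $a \in G$ in turn. A minor bookkeeping point is that the ``with probability one'' statements are indexed by $a \in G$, so the conclusion rigorously requires taking a finite intersection of null sets over $a \in G$, which is harmless since $G$ is finite.
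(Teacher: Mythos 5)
Your proof is correct, and it takes a genuinely different route from the paper's. Both arguments rest on the same key ingredient --- the equality characterization in Theorem~\ref{thm1} applied to two-part refinements of $G$ --- but they exploit it differently. You apply it directly to the singleton splits $G = \{a\} \cup (G\setminus\{a\})$, concluding immediately that each $E\{R^*(a)\mid\bX\}$ equals the common pooled average $|G|^{-1}\sum_{a'\in G}E\{R^*(a')\mid\bX\}$ almost surely, and then intersect finitely many probability-one events. The paper instead argues by contradiction: it first notes that every split $G = G_1\cup G_2$ forces $|G_1|^{-1}\sum_{a\in G_1}E\{R^*(a)\mid\bX\} = |G_2|^{-1}\sum_{a\in G_2}E\{R^*(a)\mid\bX\}$ a.s., then supposes two treatments have different conditional means on a set of positive probability, orders the treatments by conditional mean at each such point, and splits between two adjacent strictly different values to produce a binary split whose two group averages differ --- a contradiction. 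Your direct argument is cleaner on one technical point: the paper's contradiction split depends on the ordering at each $\bx_0$ and hence may vary over $\mathcal{X}_0$ (one must note that finitely many splits exist, so some fixed split works on a positive-probability subset), whereas your singleton splits are fixed in advance and the bookkeeping reduces to a finite union of null sets, which you correctly flag. Both proofs inherit whatever is assumed of the ``only if'' direction of Theorem~\ref{thm1}, so neither is more demanding of that result than the other.
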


\begin{proof}
From Theorem~\ref{thm1}, we obtain that for any $G_{1}$ and $G_{2}$ such that 
$G = G_{1}\cup G_{2}$, 
$$
\left|G_1\right|^{-1} \sum_{a \in G_1} E\left\{R^*(a) | \bX \right\} 
 = \left|G_2\right|^{-1} \sum_{a \in G_2} E\left\{R^*(a) | \bX \right\},
$$
with probability one. Suppose that the proposition does not hold. 
Then, there exists some $a_1, a_2 \in G$ such that $E\left\{R^*(a_1) | \bX = \bx_0\right\} 
\ne E\left\{R^*(a_2) | \bX = \bx_0\right\}$ for all $\bx_0 \in \mathcal{X}_0$ for some 
$\mathcal{X}_0 \subset \widehat{\mathcal{X}}$ with positive probability. 
For any $\bx_0 \in \mathcal{X}_0$, order the $a_i \in G$ such that 
$E\left\{R^*(a_i) | \bX = \bx_0 \right\}$ are nondecreasing with at least two adjacent values 
different, say positions $l$ and $l + 1$ in the ordered sequence. Then, if we let $G_1 = \left\{a_1, \ldots, a_l\right\}$ 
and $G_2 = \left\{a_{l + 1}, \ldots, a_q\right\}$, we obtain that 
$$
|G_1|^{-1} \sum_{a \in G_1} E\left\{R^*(a) | \bX = \bx_0\right\} 
 > |G_2|^{-1} \sum_{a \in G_2} E\left\{R^*(a) | \bX = \bx_0\right\},
$$ 
for all $\bx_0 \in \mathcal{X}_0$. Because $\mathcal{X}_0$ has positive probability, this creates a contradiction. 
\end{proof}
Theorem~\ref{thm2} implies that if no binary splits at one branch will lead to an increase in the optimal value, 
then all treatments within that branch are homogeneous with respect to conditional mean response. Thus, each further 
partition created by cutting the tree at a lower step will lead to an increase in the optimal value function 
until there is no heterogeneity in response across treatments in the same group. 

\subsection{Modeling Decisions} \label{variants}

The framework proposed in this paper involves creating a tree of treatments using hierarchical clustering 
and estimating an ITR as a sequence of binary decisions, one for each step of the tree. 
Within this general framework, there are a number of specific 
modeling decisions that need to be made. In this section, we describe different variants of the proposed 
method that result from making different modeling decisions. 

The tree-based ITR estimation method involves an embedded method to estimate the optimal decision 
rule at each step. Fitting a regression model at each step results in an extension of Q-learning 
(see Section~\ref{qlearn}). A number of regression models could be used as the embedded regression 
model; in our analyses, we used linear models with a LASSO penalty and random forests. 
In Section~\ref{results}, these are referred to as $\text{QL}$ and $\text{QL}_\text{RF}$, respectively. 
At each split, our extension of Q-learning uses the maximum outcome across treatments downstream 
to the left and right of the split as the observed outcomes when fitting the regression model. Alternatively, we 
could obtain predicted maximum outcomes using the regression model and use 
the predicted maximum outcomes to fit the model at the next step. 
These are analogous to the pseudo-values used in standard versions of Q-learning 
\citep{zhao2009reinforcement, zhao_reinforcement_2011}, and would allow application of the proposed 
tree-based Q-learning method to data that do not come from a PDX study. 
We examine both strategies in Section~\ref{results}, and we refer to 
Q-learning with observed outcomes and with pseudo-values as $\text{QL}_1$ and $\text{QL}_2$, respectively. 

Constructing an inverse probability weighted estimator (IPWE) at each 
step, rather than fitting a regression model (as in Q-learning), results in an extension of OWL (see Section~\ref{owl}). 
The IPWE is maximized over a class of functions. We use both the class of linear functions and the 
reproducing kernel Hilbert space associated with the Gaussian kernel function 
\citep{zhao2012estimating, chen_estimating_2017}. In Section~\ref{results}, these are referred to as 
$\text{OWL}_{\text{linear}}$ and $\text{OWL}_{\text{kernel}}$, respectively. 

The second modeling decision that must be made is the selection of the covariate set. 
We applied the proposed method using the $p_{L_{\text{SUP}}}$ dimensional set of genomic features 
resulting from screening using Brownian distance covariance, for each $L_\text{SUP} = 50, 100, 500, 1000$ (Supplementary Table 1). 
We also applied the proposed method to the lower dimensional set of features extracted from 
$L_\text{SUP}$ genes using the DAE, for each $L_\text{SUP} = 50, 100, 500, 1000$  (Supplementary Table 5). 
Analyses using the features extracted from the DAE are labeled with the subscript ``dl" 
in Section~\ref{results}. 

A final variant of the method that we explored involves replacing the observed 
outcomes with model-predicted outcomes before estimation. We fit a random forest to predict outcomes 
based on covariates alone (features and treatments) and replaced the observed outcomes with 
predictions based on the fitted model for all later stages of the analysis. This approach 
acts to denoise the observed outcomes. Analyses utilizing this approach are labeled with the 
subscript ``smoothed". In Section~\ref{results}, we report analyses using different combinations 
of the modeling decisions described above to capture synergistic effects of the 
various modeling decisions. In addition to estimating tree-based ITRs as proposed in 
Section~\ref{tree}, we also estimated ITRs by fitting model~\ref{model} using linear models with the LASSO 
penalty and random forests. These ``off-the-shelf" methods were included to compare to 
the proposed method. 

\subsubsection{Super-learning.} \label{sl}

The various modeling decisions outlined in Section~\ref{variants} above result 
in many variants of the proposed method. While certain variants may work better 
than others in certain settings, the optimal choice of modeling decisions may not 
always be clear. A natural analysis to try in this case is to combine a set of input ITRs 
estimated using various embedded models in the hopes that the resulting ITR performs 
better than any of the input ITRs. To accomplish this, we apply the super-learning algorithm of \cite{luedtke2016super}.  Briefly, \cite{luedtke2016super} propose combining a number of existing methods using  cross-validation to calculate a linear combination of latent functions to maximize the value function. In our implementation, there is not an explicit latent function due to the fact that  we are using a sequential treatment tree as our model. To approximate the value of the latent function with respect to a single treatment, we utilize the predicted reward from one of our sub-models at a given node in the tree whose direct children include our goal treatment. To optimize the superlearner,  we use simulated annealing to estimate the coefficients. Multiple chains are used whose starting points are selected from a set of randomly generated coefficients.  In Section~\ref{results}, variants of super-learning with different sets of input ITRs are referred to with the subscript $\text{sl}$, followed by the number of ITRs that are combined to produce the superlearner.

\subsection{Performance Measures for Individualized Treatment Rules} \label{performance}

We used five fold cross-validation to evaluate the performance of ITRs estimated using 
the proposed method with the various modeling decisions as described in Section~\ref{variants}. 
Within each cancer type, we divided PDX lines into five folds. An optimal ITR was estimated 
using the training data set that results from holding out each fold, and the estimated value was calculated on 
the held-out fold as $\mathbb{E}_n \left[ R I\left\{A = \widehat{D}^*\left(\bX\right)\right\} \right] 
/ \mathbb{E}_n\left[I\left\{A = \widehat{D}^*\left(\bX\right)\right\}\right]$, where $\mathbb{E}_n$ 
denotes the empirical measure taken over mice in the held-out fold. 
Tuning parameters (such as $c_1$ and $c_2$ for estimating 
tree-based ITRs and the OWL penalty parameter) were selected using cross-validation within 
each training data set. The estimated value of the estimated ITR, denoted by $\bar{V}\left(\widehat{D}^*\right)$ 
is calculated by averaging the value estimates resulting from each fold. 
We also calculated the standard deviation of value estimates across folds. 

Different cancer types may result in different marginal mean outcomes. To facilitate comparisons across 
cancer types, we also computed the observed value, $V_\mathrm{obs}$, defined as the sample average 
of centered responses for all mice that received non-null treatments, and the optimal value, 
$V_\mathrm{opt}$, defined as the sample average across PDX lines of the maximum centered response 
across treatments, i.e., 
$V_\mathrm{opt} = m_k^{-1} \sum_{j = 1}^{m_k} \mathrm{max}_{i = 1, \ldots, P_k} R_{ijk}$. 
The observed value and optimal value can be used to define two metrics for evaluating estimated ITRs: 
proportion of optimal value, defined as 
$P_\mathrm{opt}\left(\widehat{D}^*\right) = \bar{V}\left(\widehat{D}^*\right) / V_\mathrm{opt}$, 
and ratio to observed value, defined as
$P_\mathrm{obs}\left(\widehat{D}^*\right) = \bar{V}\left(\widehat{D}^*\right) / V_\mathrm{obs}$.

\section{Results}\label{results}


All analyses were performed separately for each cancer type. We applied various combinations of the modeling decisions outlined in Section~\ref{variants}.  
Each modeling variant was applied to the feature set resulting from screening with different 
values of $L_{\mathrm{SUP}}$ (see Section~\ref{screen}), utilizing the original features or the features extracted from the DAE. We present results for BAR and defer results for TTD to Section 5 of the Supplemental Materials. 

Figure~\ref{fig:score0} illustrates the strong linear relationship between 
the mean value under each estimated ITR and the optimal value for the associated cancer type and 
treatment grouping. Note that in the majority of cases the 
optimal values do not vary within a cancer type. However, in some cases 
there is variability in the optimal value due to the fact that we select $c_1$ and $c_2$ 
separately for each analysis. Each point in Figure~\ref{fig:score0} represents 
a particular estimated ITR within a particular cancer. We also observe a 
similar relationship between the mean and observed values for each estimated ITR. 
This suggests that the marginal mean outcome differs across cancer types. 
This is not unexpected, as some cancers are known to be more sensitive 
to available treatment options (e.g., CM), and others less so (e.g., PDAC). This observation motivates our use of 
$P_\mathrm{opt}\left(\widehat{D}^{*}\right)$ and $P_\mathrm{obs}\left(\widehat{D}^{*}\right)$ to evaluate 
performance of estimated ITRs. We present results for $P_\mathrm{opt}\left(\widehat{D}^{*}\right)$ here 
and defer results for $P_\mathrm{obs}\left(\widehat{D}^{*}\right)$ to Section 4 of the Supplemental Materials.

\begin{figure}[htp]
\centering
\includegraphics{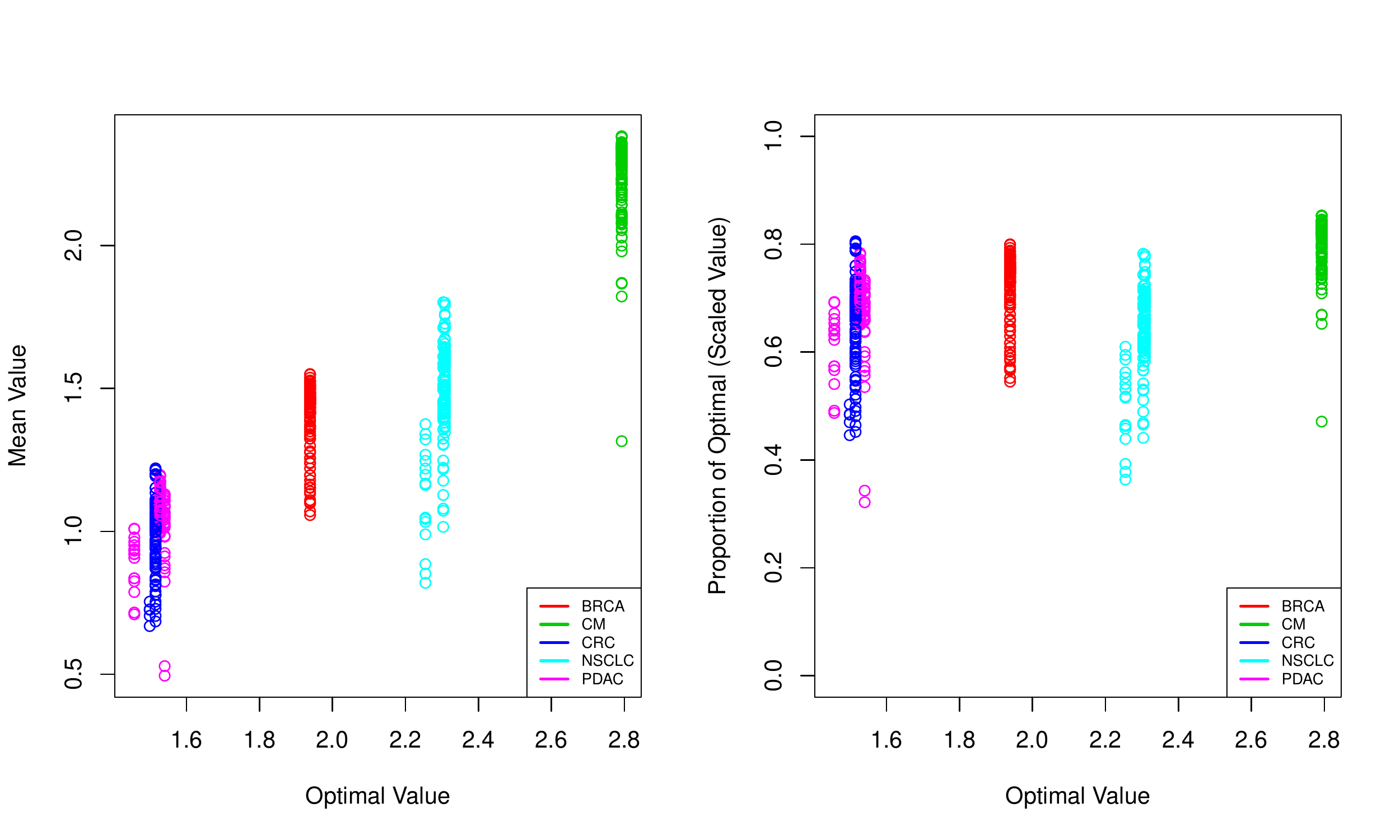}
\caption{Original (left) and scaled (right) values (for best average response) 
from all analyses, across methods, cancer types, and $L_{\text{SUP}}$. 
Optimal values for each method vary significantly by cancer type. The value 
of each estimated ITR is correlated with the optimal value across cancer types. 
We normalize the estimated value for each method by the optimal value to 
allow for comparisons between cancers. The metric, called ``proportion of optimal," 
provides a measure of how close the value of an estimated ITR is to the optimal value.}\label{fig:score0}
\end{figure}

\subsection{Relative Performance of Methods Pooling Across Conditions}

We summarize $P_\mathrm{opt}\left(\widehat{D}^{*}\right)$ for each variant of the proposed methods and the ``off-the-shelf" methods in Figure \ref{fig:scorebymethod}, pooling results across different cancer types and  values of $L_{\text{SUP}}$. From this figure, several striking trends are apparent.  For example, the application of data smoothing prior to ITR estimation boosts the overall performance for many methods, such as Q-Learning with embedded linear models (light  and dark green), and OWL methods (light and dark grey).  This pre-smoothing is less beneficial for Q-Learning methods utilizing embedded random forests (salmon and red), as the pre-smoothing itself is performed using random forests.   Q-learning methods using pseudo-values (red, $\text{QL}_{\text{2}}$) performed similarly to their $\text{QL}_\text{1}$ counterparts (pink) across conditions.   In general,  Q-learning performed better than OWL overall across  variable conditions.  In addition, using the lower dimensional features extracted from the DAE did not show significant benefit for most approaches with the exception of the OWL methods using the linear kernel,  which we will show later to be sensitive to the dimension of the feature space.  Q-Learning methods with non-linear embedded models (salmon, red) showed much better robustness to various modeling choices than linear ones, and showed to be especially helpful in OWL (dark grey).    Lastly, simpler off-the-shelf methods showed lower performance and much higher variability across conditions compared to methods utilizing the treatment tree approach. Relative to the LASSO, almost all methods utilizing the treatment tree performed better than the simpler off the shelf methods.

We also find that  utilizing a weighted combination of ITRs using the superlearner approach (dark blue) resulted in the best overall performance across conditions.  Increasing the number of ITRs included in the superlearner had the effect of boosting performance while also reducing variability in performance across conditions.  Here, the SL4 combined all four Q-Learning methods from Fig \ref{fig:scorebymethod} utilizing pre-smoothing, SL6 includes the addition of methods $\text{QL}_{\text{1,RF}}$ and $\text{QL}_{\text{2,RF}}$, SL8 adds $\text{QL}_{1}$ and $\text{QL}_{\text{2}}$, and SL16 add all Q-learning methods utilizing DAE features.   OWL methods were excluded from the superlearner due to computational time, and we expect additional performance increases through their inclusion.  This suggests that the superlearner approach can boost performance and also mitigate user uncertainty regarding the selection of various modeling approaches in cases where the optimal approach may not be clear beforehand. 

The examination of pooled overall and relative performance in Figure~\ref{fig:scorebymethod} is informative when the analyst is unsure what approach is best for their data and wants to use 
the ``safest'' approach.  For example, the results in Figure~\ref{fig:scorebymethod} can be used to determine a sequence of modeling decisions that exhibited good overall performance 
across a variety of conditions and achieved low variability across conditions. 
For example, $\text{QL}_{\text{1, smoothed}}$, or Q-learning utilizing smoothed 
outcomes and observed-values, achieved good performance on average and low variability in performance across conditions. Alternatively, one may elect to combine ITRs from multiple methods using the superlearner approach to avoid choosing an individual method, at the cost of additional computational burden.  In addition, our results suggest the following general conclusions: linear methods may benefit from the use of data smoothing prior to ITR estimation, further dimension reduction of the predictor space via DAEs is beneficial for OWL methods using the linear kernel, and nonlinear embedded models are more robust to various modeling choices than linear embedded models. 

\begin{figure}[htp]
\centering
\includegraphics{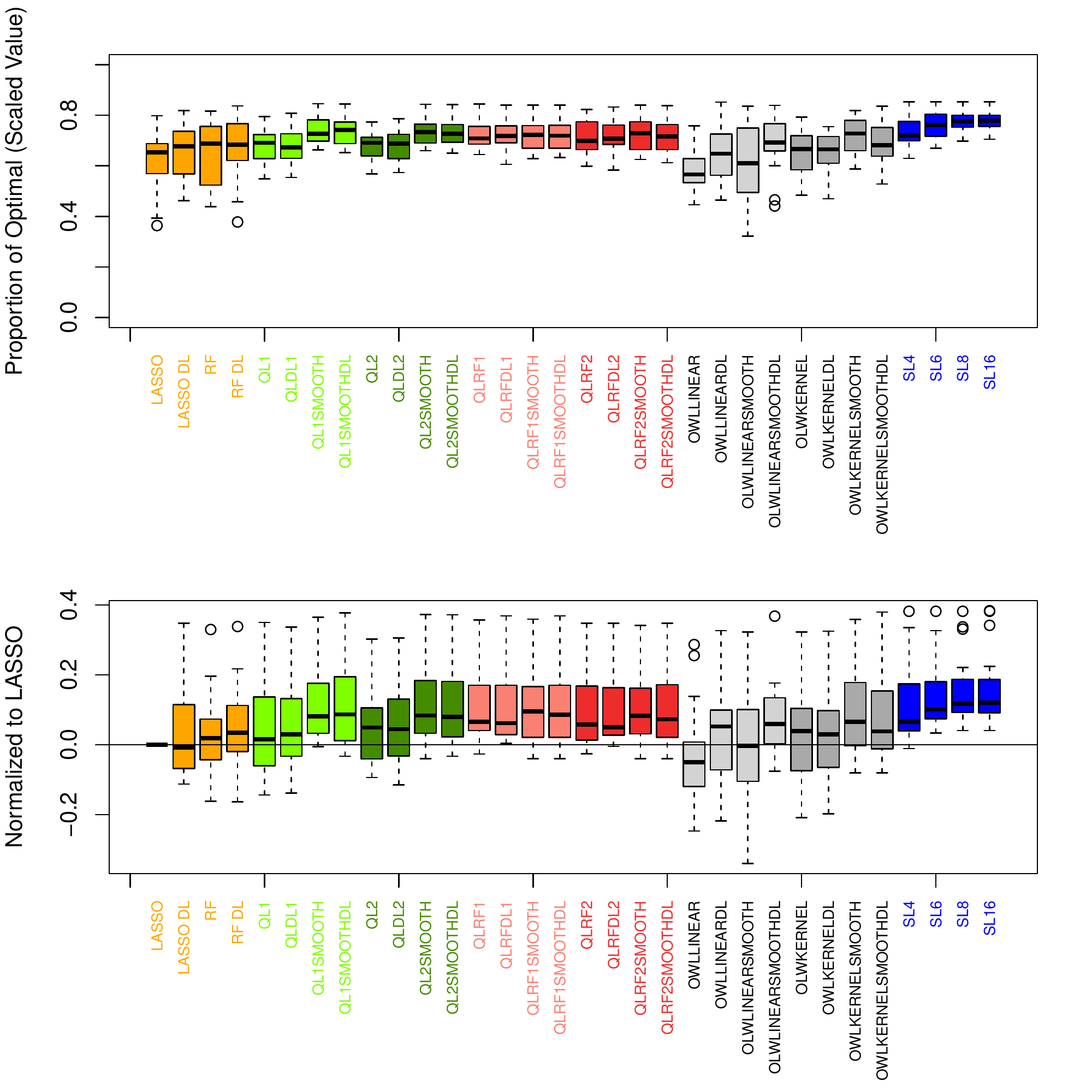}
\caption{Overall performance across methods, pooled over cancer types and number of features (top).   $P_{opt}(\widehat{D}^{*})$ for each method is also normalized to the LASSO in each condition to highlight the relative performance of each approach  (bottom).  This relative measure was constructed by subtracting the $P_{opt}(\widehat{D}^{*})$ pertaining to the LASSO from that of the other methods within each combination of cancer type and $L_{\text{SUP}}$ value.  }\label{fig:scorebymethod}
\end{figure}

\subsection{Impact of $L_{\mathrm{SUP}}$ on Performance}

We now examine the relative performance of our estimated ITRs by $L_{\mathrm{SUP}}$, 
the number of genes used in estimation. Figure~\ref{fig:scoreplsup} contains boxplots of $P_\mathrm{opt}\left(\widehat{D}^{*}\right)$ 
pooled over cancer types and stratified by estimation method and $L_{\mathrm{SUP}}$ (Supplementary Table 1).  Prior to pooling we adjust $P_\mathrm{opt}\left(\widehat{D}^{*}\right)$ in each cancer type and method by the performance at  $P_\mathrm{opt}\left(\widehat{D}^{*}\right) = 50$ to more clearly delineate changes with respect to dimension.   Most methods did not show strong trends in performance 
across $L_{\mathrm{SUP}}$, with the exception of the
$\text{OWL}_{\text{linear}}$ class of methods (light grey). In this class, $P_\mathrm{opt}\left(\widehat{D}^{*}\right)$
decreases with increasing numbers of genes. However, the same trend did not appear when utilizing 
the Gaussian kernel. Slight downward trends were also observed 
for Q-learning with an embedded linear model (light and dark green). 
These observations together suggest that methods in which a linear 
decision rule is estimated at each step of the treatment tree may be sensitive to the dimension 
of the feature space. 

The optimal set of features for ITR estimation (indexed by $L_{\mathrm{SUP}}$)  is the set of all those features for 
which at least one of the $h_j$, $j = 0, \ldots, J$ in model~(\ref{model}) is not constant. While 
the optimal set of features is unknown, the results in Figure~\ref{fig:scoreplsup} 
indicate that the proposed treatment tree-based ITR estimation method performs well 
regardless of the number of features selected, with the exception of linear OWL and 
Q-learning with an embedded linear model. These results lead to the conclusion that 
a smaller set of features post-supervised screening may be optimal for methods with an embedded linear model, and 
an embedded nonlinear model can be used to provide robustness against selecting the ``wrong'' set of features.  

\begin{figure}[htp]
\centering
\includegraphics{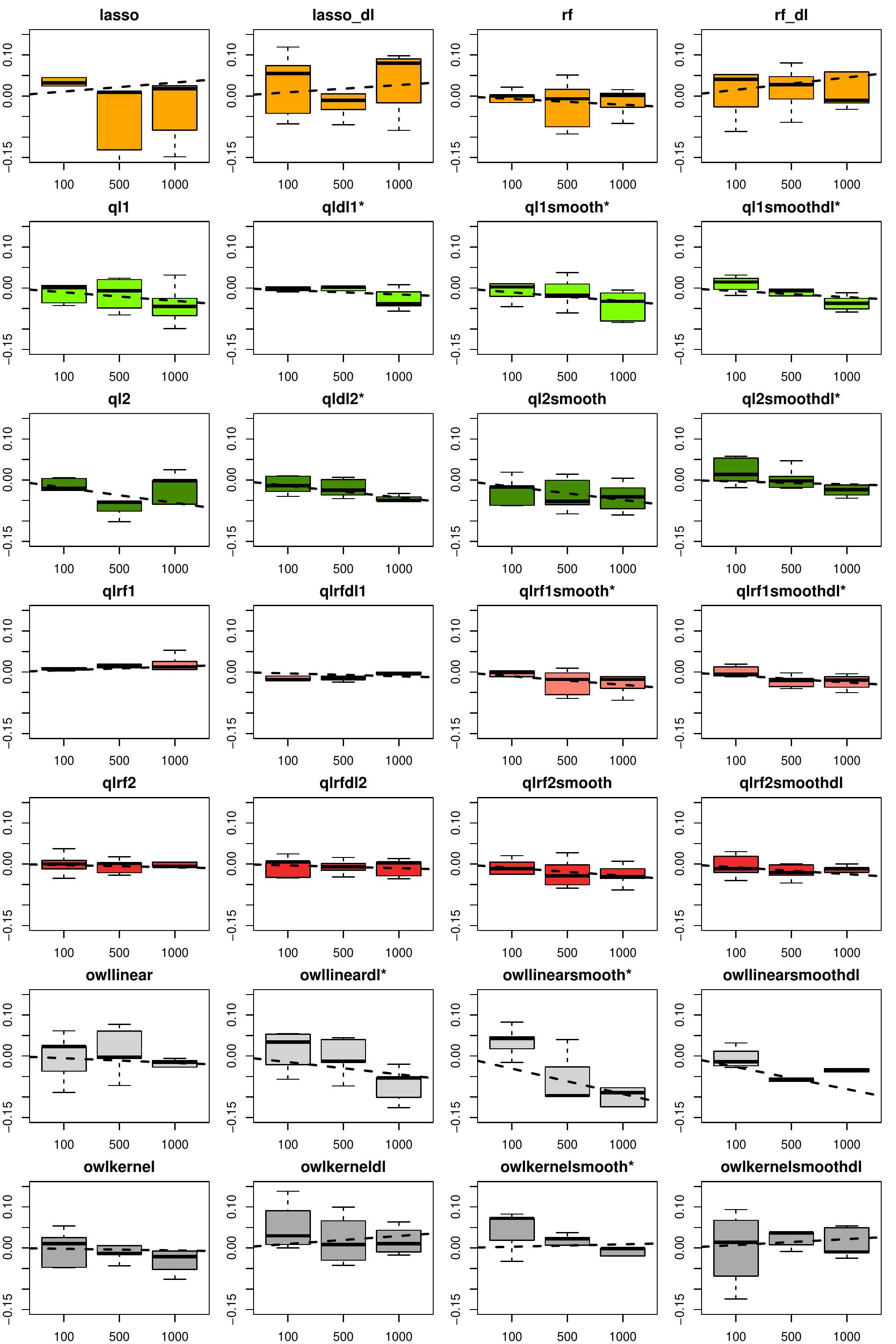}
\caption{Overall trends in performance for each method across 
$L_{\mathrm{SUP}}$, aggregated over cancer types.}\label{fig:scoreplsup}
\end{figure}

\subsection{Overall Performance by Cancer}

The performance resulting from specific modeling decisions varies across cancer types. 
Figure~\ref{fig:bycancer} contains boxplots of $P_{\mathrm{opt}}\left(\widehat{D}^{*}\right)$ 
for the best performing set of modeling decisions within the classes defined 
by the colors in Figure~\ref{fig:scorebymethod}. 
To select the best performing variant in each class, we chose the one with the largest 
$P_\mathrm{opt}\left(\widehat{D}^{*}\right)$ averaged across values of $L_{\mathrm{SUP}}$ 
within that particular class. The boxplots in Figure~\ref{fig:bycancer} contain 
$P_\mathrm{opt}\left(\widehat{D}^{*}\right)$ over values of $L_{\mathrm{SUP}}$. 
We show the full set of results corresponding 
to all methods and cancers in the Section 3 of the Supplemental Materials. 
Within certain cancers, such as breast cancer (BRCA), specific modeling decisions do not 
result in large differences in performance. This may be due in part to the 
fact that a small number of treatments 
appear to work well uniformly across samples in BRCA (Supplementary Figure 8). 
In contrast, for pancreatic cancer (PDAC) and non-small cell lung cancer (NSCLC), 
greater heterogeneity in response exists across treatments 
(Supplementary Figures 9 and 10). In addition, we find that in almost all cancer types, the superlearner tended to perform better than or similar to all other classes of methods, suggesting its use when it is unclear which individual method may be optimal for a particular dataset. 

Table~\ref{tab:besbycancer} lists the best overall set of modeling decisions 
for each cancer type, along with the genomic features that were most important 
for selecting treatments using the best performing estimated ITR. 
For cancers where the best performing ITR resulted from the DAE predictors or OWL methods using the Gaussian kernel, 
it is difficult to determine which genes were the most important. For these cases, 
we selected important features using the second-best  ITR 
(Reference Method in Table~\ref{tab:besbycancer}).  For all cancer types, the best performing ITR resulted from the tree-based approach rather than an ``off-the-shelf" method.  In addition, despite their sensitivity to dimension, the $\text{OWL}_{\text{linear}}$ class of methods were represented as the best ITR in two out of five cancers.  This suggests that as long as the correct feature dimension is selected beforehand, OWL method can perform well relative to other methods.  In practice however, the optimal dimension is difficult to ascertain unless one evaluates multiple candidate feature sets, as we have in this manuscript. 

The treatments most frequently recommended by the best performing ITR for each cancer, along with the corresponding values of $c_1$ and $c_2$, are given in Supplementary Table 6. The treatment tree for the optimal 
ITR varied in the values of $c_2$ across cancer types, suggesting variability in 
the amount of response heterogeneity across cancer types.  For example, in CM, where response tended to be strong overall (Figure~\ref{fig:score0}),  $c_2=9$, suggesting that relatively more treatments had similar response profiles across PDX lines.
The selected value of $c_1$ for the best performing ITR was low for each cancer type, 
indicating that only a small number of treatments were found to be effectively the same as ``untreated" based on the hierarchical clustering (Supplementary Table 6). 
For ``off-the-shelf" methods, $c_1 = 0$ and $c_2 = P_k$ by design since no 
grouping of treatments was performed.  


We list the average (unconditional) response for each of the $P_k$ 
treatments within cancer type in Supplementary Table 8, calculated as the sample 
average response across mice treated with each of the $P_k$ treatments across 
PDX lines. In BRCA, the treatment with the larger mean response was also the most 
recommended treatment (LEE011 + everolimus). In PDAC, however, there is less variability 
in average response across treatments, and the best performing ITR recommends 
BKM120 + binimetinib to 18 PDX lines and abraxane + gemcitabine to 18 PDX lines 
(see Supplementary Table 7). 

\begin{figure}[htp]
\centering
\includegraphics{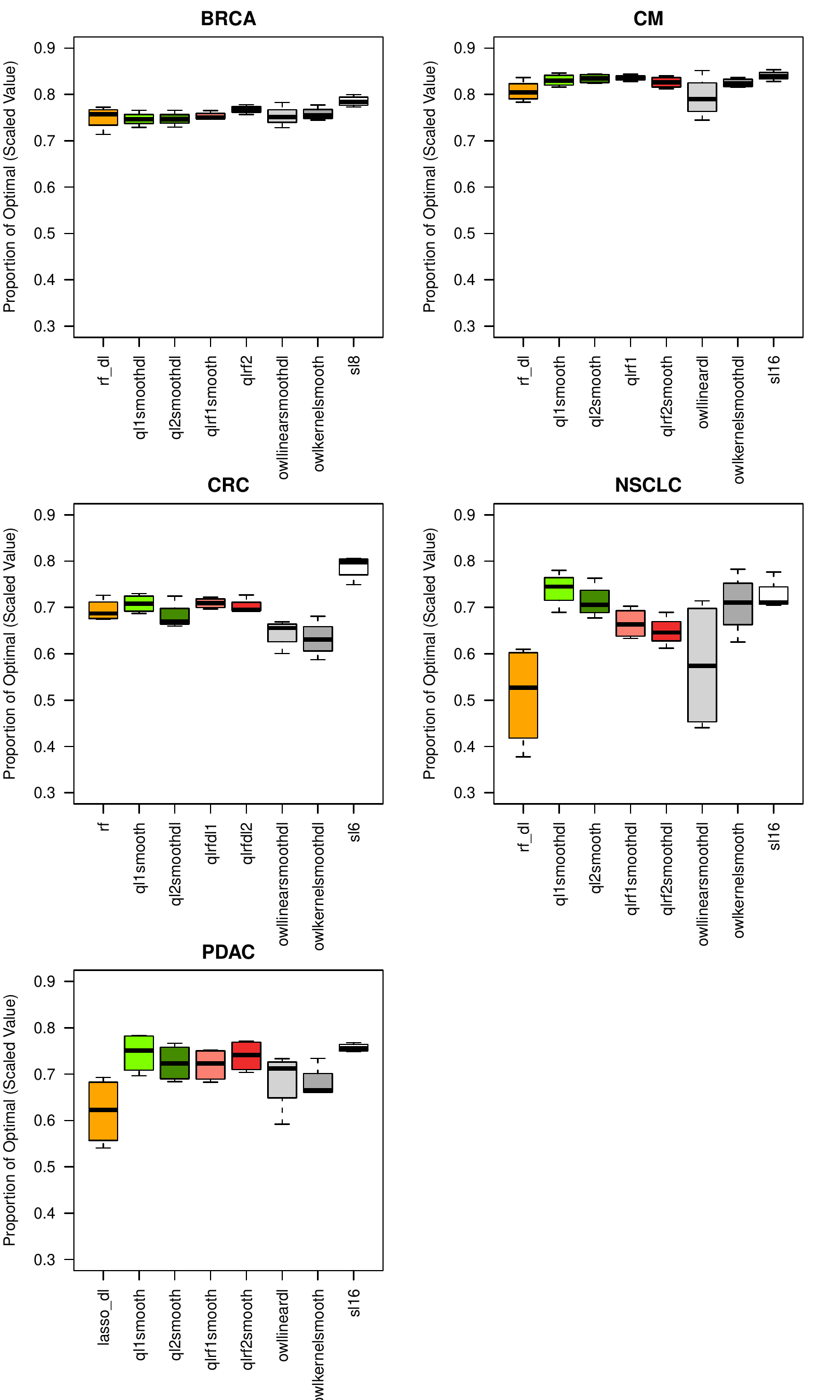}
\caption{Performance of the best estimated ITR in each class across cancer types.}\label{fig:bycancer}
\end{figure}
 
\begin{table}[ht]
\centering
\begingroup\scriptsize
\begin{tabular}{lllll}
  \hline
Cancer & Method & Lsup & Reference Method & Top 5 Predictors \\ 
  \hline
BRCA & owllinearsmoothdl & 50 & qlrf2 & COL1A1.rna,CTCFL.rna,FMNL3.rna,SRPX.rna,HMCN1.cn \\ 
  CM & owllineardl & 100 & ql1smooth & ETV7.rna,DPYSL3.rna,LEPREL1.rna,GABRE.cn,ATP2B1.rna \\ 
  CRC & ql1smooth & 100 & ql1smooth & FGG.rna,ALPK1.rna,WDR27.cn,DIDO1.mut,C10orf26.rna \\ 
  NSCLC & owlkernelsmooth & 100 & ql2smooth & POFUT2.rna,TNNI3.cn,TUBG1.cn,NAT8L.cn,PTPRE.cn \\ 
  PDAC & ql1smooth & 100 & ql1smooth & CTH.rna,DUSP4.cn,TPP2.rna,ACVR1B.rna,ZNF264.cn \\ 
   \hline
\end{tabular}
\endgroup
\caption{Best performing method and number of predictors for each cancer.  Top 5 most important predictors are listed.  Predictors pertaining to the next best performing method were provided if the top performing method utilized deep learning (Reference Method).  Predictors ending in .rna are from the gene expression data, .cn from the copy number data, and .mut from the mutation dataset.} 
\label{tab:besbycancer}
\end{table}

\subsection{ITR Performance When Limiting Features to a Single Genomic Platform} \label{sec:limit}

The three genomic platforms utilized in this study 
provide a wealth of information for ITR estimation and biomarker discovery. 
However, the high dimension of the feature space provides practical and computational 
challenges. Predictors from the same gene may be 
correlated (for example, gene expression and gene copy number) and may therefore 
be redundant. Evaluating three genomic platforms for each PDX line's original tumor 
increases cost and amount of tumor tissue required. For these reasons, we also 
explored the performance of ITRs estimated using only the RNA-seq gene expression 
platform, a common genomic assay performed by biomedical researchers. We repeated 
the same process described in Section~\ref{methods}, but using only features resulting 
from RNA-seq. 

The overall conclusions are largely similar to those in Figure \ref{fig:scorebymethod} 
(see Supplementary Figure 11). When we compare the difference in $P_\mathrm{opt}\left(\widehat{D}^{*}\right)$ 
between the ITR estimated using the full feature set and the ITR estimated using 
RNA-seq only, we find that most methods perform similarly using only RNA-seq data 
(Supplementary Figure 12). Q-learning and OWL methods with embedded linear models did slightly better than corresponding ITRs trained on all three platforms, which is likely related to the relative sensitivity of these methods to the dimensions of the feature space (smaller in the RNA-seq only analysis).  Otherwise, most methods evaluated in RNA-seq only performed similarly to ITRs trained on all three platforms.  Due to the smaller feature space in the RNA-seq only analysis we did not elect to run the deep learning variants of each method.  Overall, these results suggest that utilizing a single genomic platform 
may be a more cost-effective option that will result in estimated ITRs with comparable performance. 

\subsection{Top Genomic Features in NSCLC}

Lung cancer is one of the leading causes of cancer related deaths in the 
United States, and NSCLC accounts for the majority 
of clinical cases of lung cancer \citep{ettinger2010non}. Therapeutic agents used to 
treat NSCLC include paclitaxel, which interferes with cellular 
microtubular dynamics and cellular division through the targeting of 
tubulin \citep{wise2000gamma}, and cetuximab, which 
targets the epidermal growth factor receptor (EGFR) \citep{pirker2009cetuximab}. 
Since the mechanism of action differs between these two treatments, 
it is not surprising that we observed significant heterogeneity in response between these 
two treatments in this study (Supplementary Figure 10). 
Here, we examine the relationship between response and the 
genomic features found to be the most important for making decisions using 
the best performing estimated ITR. 

The best performing ITR for NSCLC resulted from $\text{OWL}_{\text{kernel,smoothed}}$ 
(see Table~\ref{tab:besbycancer}).  Given that that gaussian kernel for OWL does not allow direct interpretation of its predictors, we utilize the reference method $\text{QL}_{\text{2,smoothed}}$ in this cancer to examine the role of each selected predictor with respect to response.    We determined the most important 
genomic features for this ITR using the following approach. For each of the $c_2$ splits in the 
associated treatment tree, we computed the absolute value of the regression coefficient for each feature in the model fit at a given node and retained the feature with the largest magnitude value at each node. 
Cross-validation selected $c_2 = 13$; therefore, we retained 
a set of 13 top features, where 10 of these were unique (Figure~\ref{fig:NSCLC}). We then calculated Spearman's rank correlation coefficient between each selected feature and 
response, separately within each non-null treatment. This resulted in the matrix of 
feature-treatment pairwise correlation coefficients seen in Figure~\ref{fig:NSCLC}. 
We then clustered the columns of this matrix, pertaining to the seven selected genomic 
features, using Euclidean distance between the vector of correlation 
coefficients in each column. We also clustered the rows of this matrix, pertaining to 
the non-null treatments, using the tree structure that was previously constructed for 
$\text{QL}_{\text{2,smoothed}}$, rather than the correlations. 

The treatment with one of the strongest correlation to Tubulin Gamma 1 (TUBG1) 
copy number is the therapy paclitaxel, suggesting 
that a higher copy number of TUBG1 may potentiate response in patients 
being targeted with agents such as paclitaxel. 
This is notable because paclitaxel directly targets tubulin \citep{kumar1981taxol}, 
of which TUBG1 plays a major role. 
Furthermore, cetuximab is the only treatment that is anticorrelated 
with TUBG1 copy number. This unique relationship with TUBG1 is 
reflected in the treatment tree, as cetuximab is the only member of a branch 
furthest away from all other non-null treatments. Cetuximab also exhibits a 
unique mechanism of action, as the only monoclonal antibody 
EGFR inhibitor in our data set. 
The relationship between TUBG1 copy number and response to cetuximab and 
paclitaxel is displayed in Figure~\ref{fig:NSCLC}. These results indicate that 
that the observed correlations between treatment response and the top features 
determined from $\text{QL}_{\text{2,smoothed}}$ reflect the role these features play 
as important variables in the best performing ITR. 

\begin{figure}[htp]
\begin{minipage}{1.0\textwidth}
\centering
\includegraphics{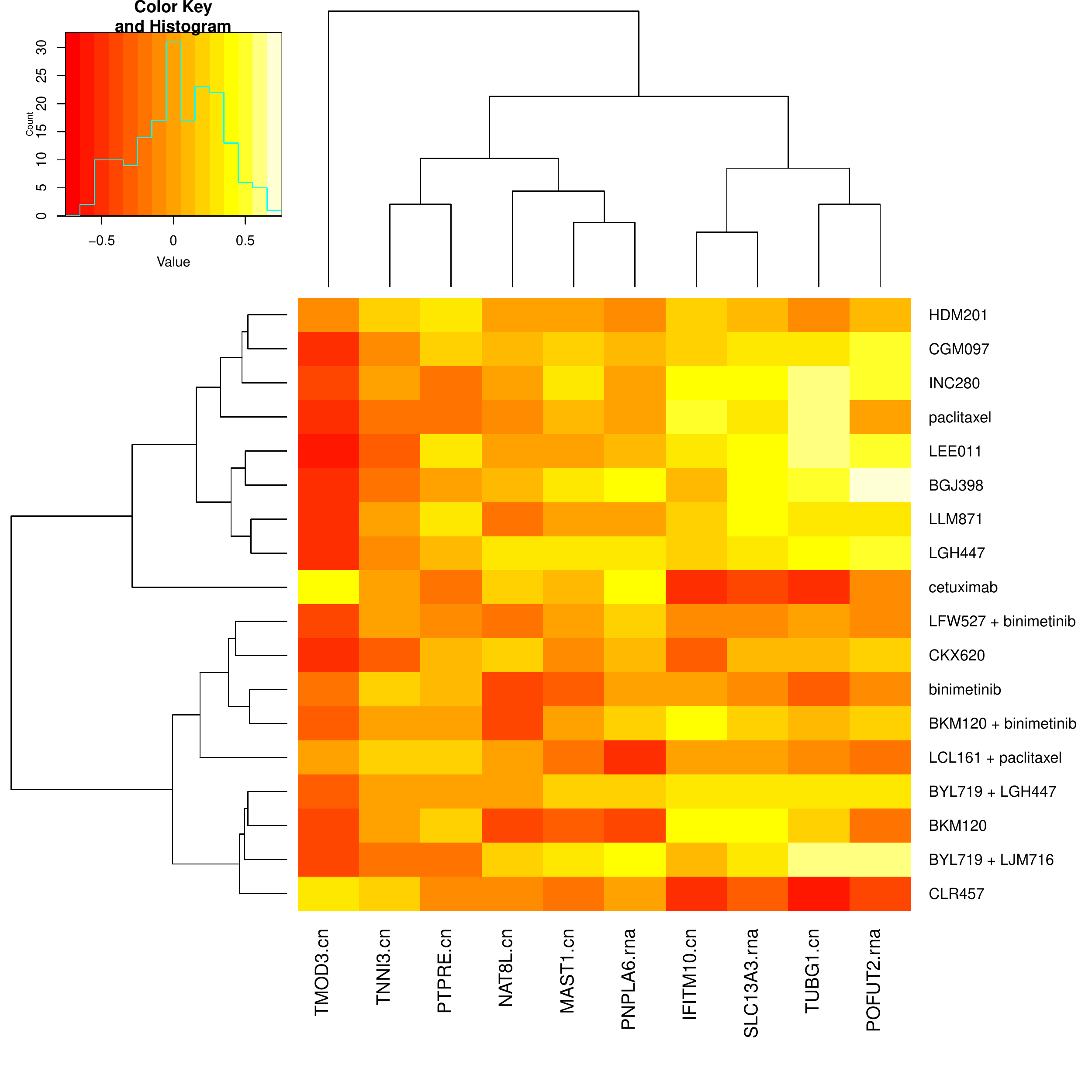}
\end{minipage}
\begin{minipage}{\textwidth}
\centering
\includegraphics{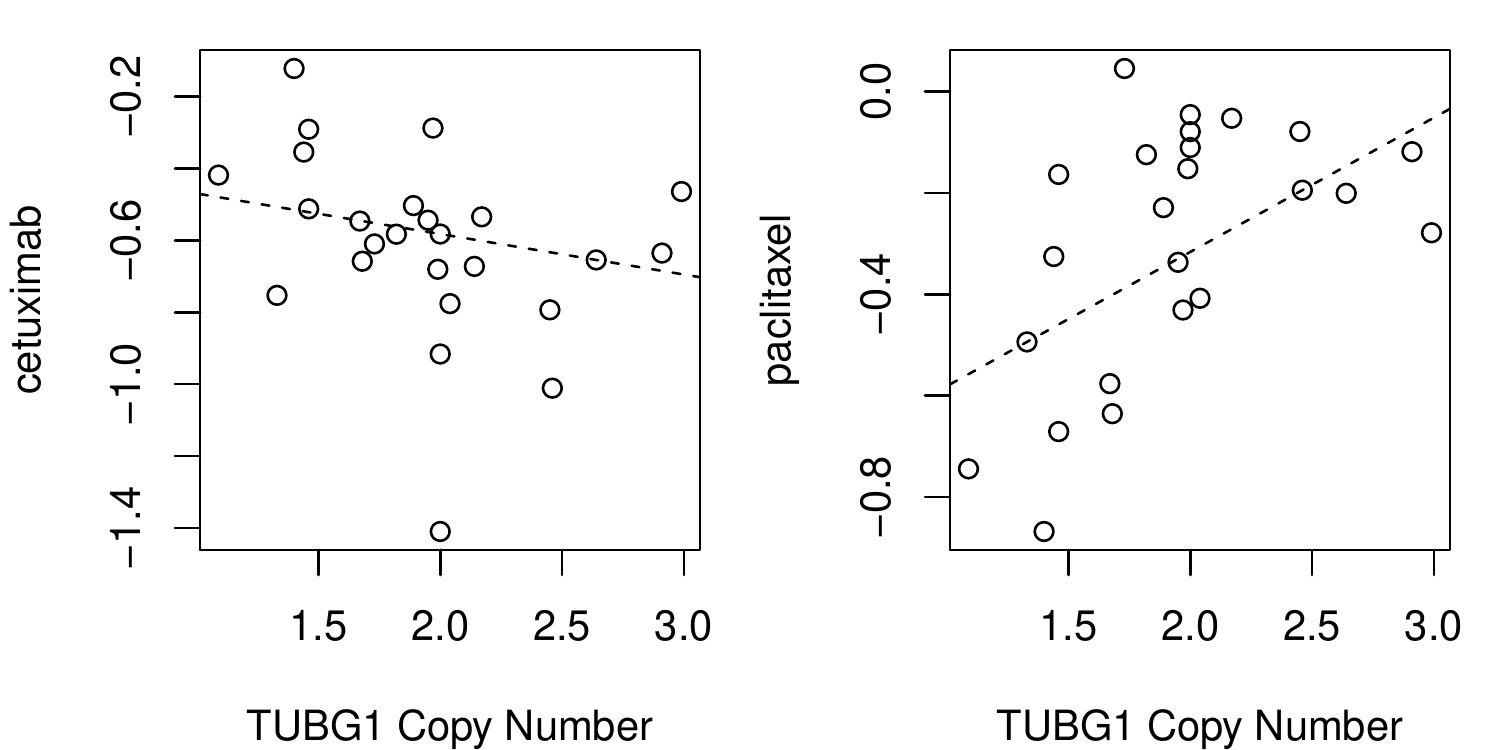}
\end{minipage}\caption{Top genomic features selected by $\text{QL}_{\text{2,smoothed}}$
in NSCLC. Cells are colored by the magnitude of their Spearman's 
correlation between response to treatment (rows) and expression of top features (columns). 
Genomic features were clustered by their vector of Spearman's correlation to each treatment 
(using Euclidean distance), whereas treatments were grouped based on the treatment 
tree constructed for $QL_{\text{2,smoothed}}$. The patterns of treatment-feature 
correlations tended to coincide with the predetermined treatment groupings.}\label{fig:NSCLC}
\end{figure}

\section{Discussion} \label{discussion}

In this paper, we introduced several approaches to ITR estimation using 
PDX data. The unique structure of a PDX study, where multiple treatments are 
applied to samples from the same human tumor implanted into mice, naturally lends 
itself to precision medicine. The substantially improved precision that results 
from the PDX structure may result in better performing ITRs. However, PDX data 
also pose a number of challenges, including a large number of 
unordered treatments and a high-dimensional feature space. These factors make it 
difficult to nonparametrically model the conditional mean of the response. 
The method we propose involves of sequence of steps that alleviate these difficulties. 
Our method involves screening the covariates to find a lower dimensional feature space, 
constructing a treatment tree that allows for recasting ITR estimation as a sequence of 
binary decisions, and estimating a decision rule at each split to select the arm 
that contains the optimal treatment. Because we aim to select the arm that contains the 
optimal treatment at each split rather than the arm with the largest average 
response, our estimation technique utilizes the maximum response downstream of each arm for 
each line. Thus, our estimation technique incorporates the unique structure of the PDX data 
by directly using the multiple responses observed per PDX line. We've shown that the 
method we propose not only produces high-quality ITRs, but also identifies genes that 
are known to be associated with response to treatment (e.g., the TUBG1 gene shown in 
Figure~\ref{fig:NSCLC}). 

The methods we propose requires making a number of modeling decisions at various stages of 
the pipeline, including selecting the dimension of the feature space, choosing embedded models, 
and selecting tuning parameters, among others. We demonstrated various combinations of these 
modeling decisions in our analyses. While certain variants performed better than others for 
certain cancer types, the treatment tree-based approach outperformed ``off-the-shelf" methods 
overall. Reducing noise by using random forest-predicted outcomes (smoothing) 
improved performance of the estimated ITRs in most settings, and basing ITRs on 
DAE-extracted features improved performance in the presence of a linear embedded model. 
In particular, Q-learning with smoothed outcomes and pseudo-values performed well 
across all settings and is a good first choice for estimating ITRs from PDX data.  
We recognize that the models, tuning parameters, and implementation discussed here, 
despite our best efforts, may not be optimal. The method studied here 
could potentially be improved through careful tuning. Studying the proposed 
method further, including through extensive simulation experiments, could yield 
more concrete recommendations as to which embedded models perform best.  Our implementation of a superlearner consisting of multiple estimated ITRs was shown to improve performance above individual ITRs.  This approach helps mitigate user uncertainty regarding the best choice of ITR estimation approach or modeling choices for a given problem, where one may combine the results from multiple ITRs using the superlearner for improved performance. 

Many of the modeling decisions in this paper were made with computational intensity in mind. 
In our analyses, we utilized two large computing clusters at the University of 
North Carolina at Chapel Hill, the Killdevil cluster with 9500 computing cores and 
the Longleaf cluster with 3600 computing cores. Future improvements on the proposed method 
must be implemented in a computationally efficient way. We also recognize that 
there are many methods we could have used that may have performed better, and that 
different methodological approaches entirely may be able to improve 
upon these results. Nevertheless, the analyses in this paper demonstrate the potential for 
using PDX studies to inform precision medicine. 

The assumption that ITRs estimated from PDX data are applicable to humans is crucial 
to this work. This assumption is based on decades of biomedical research 
on generalizing PDX results to humans. A major conclusion of \cite{gao2015high} 
was that the responses observed in their PDX lines correlated 
with the responses observed in the human patients from which the tumors 
were taken. Many prior studies have shown that PDX models show stronger correlation 
with human response compared to traditional cell line models 
\citep{whittle2015patient, scott2014patient, rosfjord2014advances}. Prior work has also 
shown that the tumor microenvironment, consisting of stromal cells and other tissue, 
may impact tumor activity and response to treatment. In PDX models, the microenvironment 
surrounding the implanted human tumor is non-human. However, several recent studies 
have suggested that tumor recruitment of mouse stroma in PDX models mirrors that seen in 
humans \citep{roife2016ex, wang2017breast} and may show similarity in treatment 
response when specifically targeted. 

A key next step for this research will be to plan and conduct validation studies in 
humans to determine if the biomarkers and ITRs discovered here can be used to 
improve outcomes in human patients. We note that ITRs based on only one genomic 
platform (RNA-seq) resulted in comparable performance to ITRs based on three genomic platforms. 
Since independent validation 
studies will be more expensive if more genomic platforms are needed, validating the 
ITRs based only on RNA-seq data using a study of human cancer patients would be a 
low-cost initial step toward validating the results in this paper. 
One advantage of the proposed method is that, in some settings, we could apply the estimated 
ITRs to external studies that included only a subset of the treatments applied in this study. 
Given a sequence of decision rules (pertaining to each step of the treatment tree), 
one could start at the lowest node that contains all of the treatments of interest 
and follow the tree to arrive at a recommended treatment, rather than starting from the top of the tree. 
We also note that, while our results allow for comparing the performance of estimated ITRs 
across different cancer types and modeling strategies, the data utilized for this paper do not 
allow for comparing the performance of ITRs estimated from PDX studies to ITRs 
estimated from human trials. Another important step forward for validating these results will 
be to compare the treatment rules discovered here to those estimated from human trials to 
determine the value of PDX studies for precision medicine. 

The design of a PDX study plays a key role in ITR estimation, and designing 
high-quality PDX studies is another key next step for this research. Our results suggest that 
the observed responses in PDX studies are noisy. Having replicates within PDX line, i.e., 
multiple mice per line assigned to the same treatment, may improve the performance of the estimated 
ITRs. The design could also be improved by having a larger number of distinct tumor 
lines to ensure sufficient representation of cancer heterogeneity across a diverse spectrum 
of cancer patients. 

While our research has, in some ways, raised more questions than it has answered, 
we feel that the treatment rules, biomarkers, and other results we have discovered 
here are interesting in their own right and merit further research, including validation 
studies. Future research in this area will allow us to make measurable advances in 
applying PDX studies in precision medicine and translating the results into clinical 
practice. Plans for such future research is underway.

\appendix

\makeatletter   
 \renewcommand{\@seccntformat}[1]{APPENDIX~{\csname the#1\endcsname}.\hspace*{1em}}
 \makeatother

\bibliographystyle{ECA_jasa}
\bibliography{example}

\end{document}